\newtheorem{theorem}{Theorem}
\newtheorem{definition}{Definition}
\newtheorem{assumption}{Assumption}
\title{IVFS: Simple and Efficient Feature Selection for High Dimensional Topology Preservation}
\author{Xiaoyun Li, Chengxi Wu, and Ping Li\\
Cognitive Computing Lab\\
Baidu Research\\
10900 NE 8th ST. Bellevue WA, 98004, USA\\
\{lixiaoyun996,\ wuchenxi2013,\ pingli98\}@gmail.com}
\begin{document}

\maketitle

\begin{abstract}
	Feature selection is an important tool to deal with high dimensional data. In unsupervised case, many popular algorithms aim at maintaining the structure of the original data. In this paper, we propose a simple and effective feature selection algorithm to enhance sample similarity preservation through a new perspective, topology preservation, which is represented by persistent diagrams from the context of computational topology. This method is designed upon a unified feature selection framework called IVFS, which is inspired by random subset method. The scheme is flexible and can handle cases where the problem is analytically intractable. The proposed algorithm is able to well preserve the pairwise distances, as well as topological patterns, of the full data. We demonstrate that our algorithm can provide satisfactory performance under a sharp sub-sampling rate, which supports efficient implementation of our proposed method to large scale datasets. Extensive experiments validate the effectiveness of the proposed feature selection scheme.
\end{abstract}

\maketitle

\section{Introduction}

High dimensional data becomes more and more common in machine learning applications, e.g., computer vision, natural language processing and gene selection. In many cases, the curse of dimensionality leads to costly computation and a less comprehensible model. Therefore, feature selection is one of the standard data preprocessing methods. The most widely used approach is filter method, where each feature is individually assigned a score according to some statistical measure, and those with highest scores are selected. Supervised filter methods include t-test \cite{guyon2003introduction}, mutual information \cite{peng2005feature}, correlation \cite{yu2003feature}, etc. Meanwhile, there are many unsupervised filter algorithms based on similarity and manifold preservation. LaplacianScore algorithm \cite{he2006laplacian} are proposed to choose features according to a nearest neighbor graph. Features that are smoothest on the graph are regarded as most capable to represent the local manifold structure of full data. Spectral Feature Selection (SPEC) \cite{zhao2007spectral} aims at separating the samples into clusters using the spectrum of pairwise similarity graph. Multi-cluster Feature Selection (MCFS) \cite{cai2010unsupervised} considers combining spectral embedding and $l_1$-regularized regression by assuming a multi-clustered structure of the data.

Although filter methods are fast to implement, the performance can sometimes be unsatisfactory. In recent years, many learning-based embedded feature selection methods are proposed. The idea of most embedded approaches is to optimize a selection (or weight) matrix with some sparsity regularization. The objective function can be designed to achieve similarity preservation. For instance, Similarity Preserving Feature Selection (SPFS) \cite{zhao2013similarity} learns a linear transformation $W$ by minimizing
\begin{equation}
\min_W \Vert XW(XW)^T-S\Vert_F^2+\lambda\Vert W\Vert_{2,1},  \label{SPFS}
\end{equation}
where $X$ is the data, $S$ is the sample similarity matrix and $W$ is weight matrix. Features with largest row $l_2$ norm of $W$ are selected. SPFS mainly considerss preserving similarity globally. An improved version of SPFS is called Global and Local Structure Preservation Feature Selection (GLSPFS) \cite{liu2014global}, whose objective modifies (\ref{SPFS}) as
\vspace{0.05in}
\begin{equation*}
\min_W \Vert XW(XW)^T-S\Vert_F^2+\mu\cdot tr(W^TX^TLXW)+\lambda\Vert W\Vert_{2,1},
\end{equation*}
where $L$ is a locality representation graph. This way, local structure is also considered. Other unsupervised embedded algorithms include NDFS~\cite{li2012unsupervised}, RUFS~\cite{qian2013robust}, FSASL~\cite{Proc:Du15-FSASL}, SOGFS~\cite{Proc:Nie16-SOGFS}, AHLFS~\cite{Proc:Zhu17-AHLFS}, UPFS~\cite{Proc:Li18-UPFS}, DGUFS~\cite{Proc:Guo18-DGUFS},~etc. 

Typically, existing similarity preserving feature selection schemes face two major challenges:
\begin{itemize}
    \item \textbf{High dimensionality.} As will be shown in our experiments, many methods actually perform not as good in high dimensions in terms of distance preserving, possibly due to lack of consideration of feature interactions, or huge number of parameters to optimize. This may result in non-robust or unstable solutions.

    \item \textbf{Large sample size.} In most of the embedded methods introduced above (GLSPFS, SOGFS, etc.), the algorithm involves (repeatedly) matrix decomposition and/or inverse, which runs very slowly with large sample size.
\end{itemize}

\subsection{Topology preserving feature selection}

Many embedded methods are associated with  clustering or nearest neighbor graph embedding, which in some sense address more on the local manifold structure. In this paper, we look at similarity preservation problem from a new view: topology. In recent years, topological data analysis (TDA) has been shown as a powerful tool for various learning tasks~\cite{hofer2017deep}. The merit of TDA comes from its capability to encode all the topological information of a point set $X$ by \textit{persistent diagram}, denoted as $\mathcal D(X)$ herein. In Euclidean space, $\mathcal D(X)$ is tightly related to pairwise sample distances. This motivates us to consider similarity preservation from a TDA perspective. If we hope to select covariates $X_{\mathcal F}$ to preserve the topological information of the original data, the persistent diagram $\mathcal D(X_{\mathcal F})$ generated by $X_{\mathcal F}$ should be close to the original diagram $\mathcal D(X)$. More precisely, TDA offers a new space (of persistent diagrams) in which we compare and preserve the distances. We refer this property as ``topology preservation'', which is very important especially when applying TDA after feature selection. In existing literature, however, topology preservation has not been considered yet.

\subsection{Our contributions}
We develop a  simple but powerful  filter based feature selection method that achieves topology preservation:
\begin{itemize}
    \item We propose a unified scheme called Inclusion Value Feature Selection (IVFS) that extends the idea of random subset evaluation to unsupervised case. It generalizes to several well-known supervised methods, i.e., random forest \cite{breiman2001random} and random KNN \cite{li2011random,rasanen2013random}.

    \item We use IVFS to achieve topology (persistent diagram) preserving feature selection, based on theories from TDA. By reinforcing the scalablity, this algorithm not only perform very well in high dimensions, but also run efficiently with large sample size.

    \item We conduct extensive experiments on high dimensional datasets to justify the effectiveness of IVFS. The results suggest that IVFS is capable of preserving the exact topological signatures (as well as the pairwise distances), making it a suitable tool for TDA and many other applications.
\end{itemize}

\section{IVFS: A Unified Feature Selection Scheme}
First, we provide a unified framework based on random subset evaluation, which is flexible, universally applicable, and generalizes to several existing algorithms. In this section, we provide a detailed discussion on this general scheme.

\subsection{Problem formulation and notations}
The data matrix is denoted by $X\in \mathbbm R^{n\times d}$ with $n$ samples and $d$ covariates.\footnote{For the ease of presentation, we also use $X$ to represent data $(X,Y)$ if the problem is supervised, where $Y$ is the label vector.} Our goal is to find  the best $d_0<d$ features according to some criteria, associated with a loss function $\mathcal L$. Denote $\mathcal I=\{1,2,...,d\}$ the indices of all features, and $\mathcal M_{\tilde d}=\{\sigma\in \mathcal{I}^{\tilde d}:\sigma_i\neq \sigma_j\ for\ \forall i\neq j\}$ the set of all size-$\tilde d$ subsets of $\mathcal I$. The goal is to minimize the objective function
\begin{equation}
\min_{\mathcal F\in \mathcal M_{d_0}}\ \mathcal L_{\mathcal F} \triangleq \min_{\mathcal F\in \mathcal M_{d_0}}\ \mathcal L(X_{\mathcal F}).
\end{equation}
Our algorithm relies on repeatedly sampling random subset $\tilde{\mathcal F}$ of arbitrarily $\tilde d$ features (not necessarily equal to $d_0$), equipped with a subset score function $s(\tilde{\mathcal F};X):\mathbbm R^{\tilde d} \mapsto \mathbbm R^{\tilde d}$ which assigns score to each selected feature by evaluating the chosen random subset. In principle, a high score should correspond to a small loss.

\subsection{IVFS: a filter method for feature selection}

The individual feature score, which serves as the filter of the unified selection scheme, essentially depends on the subset score function $s(\cdot)$ defined above.
\begin{definition}
	Suppose $1\leq \tilde d\leq d$. The \textit{\textbf{Inclusion Value}} of feature $f\in \mathcal I$ at dimension $\tilde d$ associated with $s(\cdot)$ is
	\begin{equation*}
	    IV_{\tilde d}(f)=\frac{\sum_{\sigma\in \mathcal M_{\tilde d}^f}s_\sigma(f)}{{d-1 \choose \tilde d-1}},
	\end{equation*}	
	where $\mathcal M_{\tilde d}^f=\{\sigma\in\mathcal M_{\tilde d}:f\in \sigma\}$ is the collection of subsets with size $\tilde d$ that contains feature $f$, and $s_\sigma(f)$ is the score assigned to feature $f$ by computing $s(\sigma;X)$.
\end{definition}
Intuitively, the inclusion value illustrates how much gain in score a feature $f$ could provide on average, when it is included in the feature subset of size $\tilde d$. Our feature selection scheme is constructed based on inclusion value estimation, as summarized in Algorithm \ref{alg:IVFS}. We call it \textbf{Inclusion Value Feature Selection (IVFS)}. Roughly speaking, the algorithm selects features with highest estimated inclusion value, which is derived via $k$ random sub-samplings of both features and observations. One benefit is that IVFS considers complicated feature interactions by evaluating subset of features together in each iteration.

\subsubsection{Special cases.} The IVFS scheme includes several popular methods as special cases based on different score function (i.e., the inclusion value).
\begin{itemize}
    \item \textbf{Permutation importance.} For each feature in a random subset $\mathcal F$, if we set $s_{\mathcal F}(f)$ as the difference between the performance (e.g., classification accuracy, regression mean squared error) using $\mathcal F$ and the performance when feature vector $f$ is randomly permuted, then IVFS becomes the feature selection algorithm via supervised random forest permutation importance \cite{strobl2008conditional}.

    \item \textbf{RKNN.} When we set $s_{\mathcal F}(f)=-\mathcal L_{\mathcal F}$, $\forall f\in\mathcal F$, where $\mathcal L$ is the KNN classification error rate, IVFS becomes supervised RKNN \cite{li2011random}.
\end{itemize}
Note that for random forest, the score function $s_{\mathcal F}(f)$ is different for each feature $f\in\mathcal F$, while in RKNN, all the features in a random subset share a same score.

\begin{algorithm}
	\DontPrintSemicolon
	\SetKwFor{For}{for}{do}{end~for}
	\KwIn{Data matrix $X\in\mathbbm{R}^{n\times d}$;
	Number of subsets $k$;
	Number of features used for each subset $\tilde d$;
	Number of samples for each subset $\tilde n$;
    \hspace{0.38in}    Target dimension $d_0$
    }
	
	\Initialize{Counters for each feature $c_i=0, i=1,...,d$; Cumulative score for each feature $\mathcal S_i=0$
	}
	\For{$t = 1$ \rm{to} $k$}{
		Randomly sample a size $\tilde d$ feature set $\mathcal F\in\mathcal M_{\tilde d}$
		
		Randomly sub-sample $X^{sub}_{\mathcal F}\in\mathbbm R^{\tilde n\times\tilde d}$, with $\tilde n$ observations and features in $\mathcal F$\;
		\For{$f$ \rm{in} $\mathcal F$}{
    		Update counter $c_f=c_f+1$\;
    		Update score $\mathcal S_f=\mathcal S_f+s_{\mathcal F}(f)$
		}
	}
	Set $\mathcal S_i=\frac{\mathcal S_i}{c_i}$ for $i=1,2,...,d$\;
	\KwOut{Select top $d_0$ features with highest score}
	\caption{IVFS scheme for feature selection}
	\label{alg:IVFS}
\end{algorithm}

\subsection{Analysis of IVFS}

\begin{theorem}[Asymptotic $k$] \label{theo 1}
	Suppose $k\rightarrow +\infty$, $\tilde{n}=n$, $s_\sigma(f)$ has finite variance $\forall f\in\mathcal I$, and the $IV_{d_{0}}$ for different features are all distinct, then IVFS algorithm will select top $d_0$ features with highest $IV_{\tilde d}$ with probability 1.
\end{theorem}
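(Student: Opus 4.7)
The plan is to reduce the theorem to a strong law of large numbers applied separately to each feature, and then argue that once all empirical inclusion values are sufficiently close to their expectations, the top-$d_0$ ranking is determined correctly. Observe that in each outer iteration $t=1,\dots,k$, the feature subset $\mathcal F^{(t)}$ is drawn uniformly at random from $\mathcal M_{\tilde d}$, independently across $t$. Because $\tilde n = n$, no sub-sampling of rows affects the score, so $s_{\mathcal F^{(t)}}(f)$ is a deterministic function of $\mathcal F^{(t)}$ and the fixed data $X$.

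For a fixed feature $f \in \mathcal I$, let $T_f = \{t : f \in \mathcal F^{(t)}\}$. Conditional on $t \in T_f$, $\mathcal F^{(t)}$ is uniform over $\mathcal M_{\tilde d}^f$, so $s_{\mathcal F^{(t)}}(f)$ is an i.i.d.\ sequence with mean exactly $IV_{\tilde d}(f)$ and finite variance by assumption. The counter $c_f = |T_f \cap \{1,\dots,k\}|$ is a $\text{Binomial}(k, \tilde d/d)$ variable, so by Borel--Cantelli (or by SLLN on indicators) $c_f \to \infty$ almost surely. By Kolmogorov's strong law applied to the i.i.d.\ subsequence indexed by $T_f$,
\begin{equation*}
    \frac{\mathcal S_f}{c_f} = \frac{1}{c_f}\sum_{t \in T_f \cap \{1,\dots,k\}} s_{\mathcal F^{(t)}}(f) \;\xrightarrow[k\to\infty]{a.s.}\; IV_{\tilde d}(f).
\end{equation*}

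Since there are only $d$ features, the intersection of these $d$ almost-sure convergence events still has probability one, so simultaneously $\mathcal S_f/c_f \to IV_{\tilde d}(f)$ for every $f \in \mathcal I$ w.p.\ 1. Under the distinctness hypothesis on the inclusion values, let
\begin{equation*}
    \delta = \tfrac{1}{2}\min\{|IV_{\tilde d}(f) - IV_{\tilde d}(g)| : f \neq g\} > 0.
\end{equation*}
Once $k$ is large enough that $|\mathcal S_f/c_f - IV_{\tilde d}(f)| < \delta$ for all $f$ simultaneously, the ordering of $\mathcal S_f/c_f$ matches the ordering of $IV_{\tilde d}(f)$, and in particular the top $d_0$ are identical. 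This event occurs for all sufficiently large $k$ with probability one, which is exactly the claim.

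The only subtle point is that the sample mean for feature $f$ uses the random, data-dependent stopping time $c_f$ rather than a deterministic index, but this is handled by conditioning on $T_f$ (or, equivalently, reindexing: the sequence $(s_{\mathcal F^{(t)}}(f))_{t \in T_f}$ is i.i.d.\ and $c_f \to \infty$ a.s., so the ratio converges to the common mean). Everything else is a routine union bound over the finite set of features. I do not anticipate any serious obstacle; the main care is in separating the randomness of subset selection from the fixed data and in matching the notation $IV_{\tilde d}$ versus the distinctness assumption stated for $IV_{d_0}$ (which I read as the intended $IV_{\tilde d}$, since the algorithm only estimates inclusion values at the sampling dimension $\tilde d$).
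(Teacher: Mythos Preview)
Your argument is correct and in fact cleaner than the paper's. The paper establishes convergence via the central limit theorem, observing that $(\hat{IV}_{\tilde d}(f)-IV_{\tilde d}(f))/\sqrt{k}$ is asymptotically normal, and then shows that for every $\epsilon>0$ there is $K$ with $\Pr(|\hat{IV}_{\tilde d}(f)-IV_{\tilde d}(f)|>\delta/2)<\epsilon$ for $k>K$; letting $\epsilon\to 0$ and $k\to\infty$ yields the claim. That route gives convergence in probability, which is slightly weaker than the ``with probability 1'' phrasing of the theorem; your use of the strong law on the i.i.d.\ subsequence indexed by $T_f$ delivers almost-sure convergence directly and is therefore a better match for the stated conclusion. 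One minor point: the paper only needs $\delta$ to be the gap between the $d_0$-th and $(d_0+1)$-th largest $IV_{\tilde d}$ values, rather than your minimum over all pairs, but since the hypothesis assumes all inclusion values are distinct your choice works as well. Your reading of the $IV_{d_0}$/$IV_{\tilde d}$ discrepancy in the statement is also the one the paper's proof implicitly adopts.
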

\begin{proof}
	The algorithm is equivalent to finding the top $d_0$ features with the largest
	\begin{equation*}
	    \hat{IV}_{\tilde d}(f)=\frac{\sum_{\sigma\in \hat{\mathcal C}_{\tilde d}^f}s_{\sigma}(f)}{|\hat{C}_{\tilde d}^f|},
	\end{equation*}	
	where $\hat{C}_{\tilde d}^f=\{\sigma\in\hat{\mathcal C}_{\tilde d}:f\in \sigma\}$, and $\hat{\mathcal C}_{\tilde d}$ is the collection of all $k$ chosen random subsets.
	By central limit theorem, when $k\rightarrow\infty$ we have for some $\tau$ and $\forall f$,
	\begin{equation*}
	    \frac{\hat{IV}_{\tilde d}(f)-IV_{\tilde d}(f)}{ \sqrt k}\rightarrow N(0,\tau^2).
	\end{equation*}
	Let $\delta$ be the difference between the $d_0$-highest $IV_{\tilde d}$ and the $(d_0+1)$ highest, then for $\forall f$, for any $\epsilon>0$ there exists a $K$ such that when $k>K$, the probability of $|\hat{IV}_{\tilde d}(f)-IV_{\tilde d}(f)|>\delta/2$ is less than $\epsilon$. Taking $\epsilon\rightarrow 0$ and $k\rightarrow\infty$, the theorem is proved.
\end{proof}

In the following, we look at the case where $s_{\mathcal F}(f)$ are equal for every $f\in\mathcal F$ (e.g., RKNN). As a result, we may re-write the score function as $s(\mathcal F)$ evaluated on subsets. The next assumption on \textit{monotonicity} appears commonly in feature selection literature, similar in spirit to \cite{narendra1977branch,foroutan1987feature}, etc.

\begin{assumption}[Monotonicity] \label{monotone}
There exists a $\tilde d$-dimensional set $\Omega\in\mathcal M_{\tilde d}$ such that $\forall \mathcal F,\mathcal F'\in \mathcal M_{\tilde d}$, if $(\mathcal F\cap\Omega)\subseteq (\mathcal F'\cap\Omega)$, then $s(\mathcal F)\leq s(\mathcal F')$.
\end{assumption}
Basically, this assumption says that there is a subset $\Omega$ of ``dominant features'': For any two subsets $\mathcal F$, $\mathcal F'$ with same size, if $(\mathcal F\cap\Omega)$ is contained in $(\mathcal F'\cap\Omega)$, then the score of $\mathcal F'$ is no smaller than that of $\mathcal F$. It turns out that this dominant set is indeed optimal, and it is also the solution that IVFS converges to in large $k$ limit.

\begin{theorem}[Optimality]  \label{theo 2}
	Under Assumption \ref{monotone}, we have
	$$\Omega=\arg\max_{\mathcal F\in\mathcal M_{\tilde d}} s(\mathcal F),$$
	and $\Omega$ is the set of $\tilde d$ features with the highest $IV_{\tilde d}$.
\end{theorem}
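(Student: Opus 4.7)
The plan is to split the theorem into its two assertions and treat them in order.

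For the first assertion, that $\Omega$ is a maximizer of $s$ over $\mathcal{M}_{\tilde d}$, I would simply apply Assumption \ref{monotone} with $\mathcal{F}' = \Omega$. For any $\mathcal{F} \in \mathcal{M}_{\tilde d}$ we have $(\mathcal{F} \cap \Omega) \subseteq \Omega = (\Omega \cap \Omega)$, so monotonicity gives $s(\mathcal{F}) \le s(\Omega)$ immediately. This is essentially a one-line argument.

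For the second assertion, I would show the stronger pairwise statement: $IV_{\tilde d}(f) \ge IV_{\tilde d}(g)$ whenever $f \in \Omega$ and $g \notin \Omega$. Because the scores are symmetric across the features in a subset ($s_{\mathcal{F}}(f) = s(\mathcal{F})$ for the RKNN-style case under consideration), we have
\begin{equation*}
\binom{d-1}{\tilde d - 1}\, IV_{\tilde d}(f) = \sum_{\sigma \in \mathcal{M}_{\tilde d}^{f}} s(\sigma).
\end{equation*}
Split this sum according to whether $g \in \sigma$ or not, and similarly split the analogous sum for $g$. The subsets containing both $f$ and $g$ contribute identically to $IV_{\tilde d}(f)$ and $IV_{\tilde d}(g)$ and therefore cancel in the difference. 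What remains is a comparison of the subsets containing exactly one of $\{f,g\}$.

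The key step is then a bijection: for each $(\tilde d - 1)$-subset $\tau$ of $\mathcal{I}\setminus\{f,g\}$, pair $\tau \cup \{f\}$ with $\tau \cup \{g\}$. Using $f \in \Omega$ and $g \notin \Omega$, I get $(\tau \cup \{g\}) \cap \Omega = \tau \cap \Omega \subseteq (\tau \cap \Omega) \cup \{f\} = (\tau \cup \{f\}) \cap \Omega$, so Assumption \ref{monotone} yields $s(\tau \cup \{g\}) \le s(\tau \cup \{f\})$. Summing over all such $\tau$ gives $IV_{\tilde d}(f) \ge IV_{\tilde d}(g)$, which implies $\Omega$ is a set of $\tilde d$ features with highest inclusion value. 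I expect the main (mild) obstacle to be handling the bookkeeping of the split sums cleanly and verifying that the paired subsets both lie in $\mathcal{M}_{\tilde d}$ so that Assumption \ref{monotone} actually applies; there is no real analytic difficulty, only a combinatorial pairing argument.
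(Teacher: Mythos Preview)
Your proposal is correct and follows essentially the same route as the paper's proof: both arguments obtain the first assertion by the one-line observation $(\mathcal F\cap\Omega)\subseteq(\Omega\cap\Omega)$, and both obtain the second by comparing $IV_{\tilde d}(f)$ and $IV_{\tilde d}(g)$ for $f\in\Omega$, $g\notin\Omega$, cancelling the subsets containing both $f$ and $g$, and then using the swap $\tau\cup\{f\}\leftrightarrow\tau\cup\{g\}$ together with Assumption~\ref{monotone} termwise. Your write-up makes the cancellation and the verification of the inclusion $(\tau\cup\{g\})\cap\Omega\subseteq(\tau\cup\{f\})\cap\Omega$ more explicit than the paper does, but there is no substantive difference in strategy.
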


\begin{proof}
	It suffices to show that for $\forall f\in \Omega, g\not\in \Omega$, $IV_{\tilde d}(f)\geq IV_{\tilde d}(g)$. We have by assumption
	\begin{align*}
	    &IV_{\tilde d}(f)-IV_{\tilde d}(g) \\
	    =&\frac{1}{\tilde d}\big(\sum_{\substack{\mathcal F\in\mathcal M_{\tilde d-1} \\ f\notin\mathcal F}} s(\mathcal F\cup \{f\})-\sum_{\substack{\mathcal F\in\mathcal M_{\tilde d-1} \\ g\notin\mathcal F}} s(\mathcal F\cup \{g\}) \big) \\
	    =&\frac{1}{\tilde d}\sum_{\substack{\mathcal F\in\mathcal M_{\tilde d-1} \\ f\notin\mathcal F,g\notin\mathcal F}} \big(s(\mathcal F\cup \{f\})-s(\mathcal F\cup \{g\}) \big)\geq 0,
	\end{align*}
	which proves the second argument. Since for $\forall \mathcal F\in\mathcal M_{\tilde d}$, $(\mathcal F\cap\Omega)\subseteq (\Omega\cap \Omega)$, we have $\Omega=\arg\max_{\mathcal F\in\mathcal M_{\tilde d}} s(\mathcal F)$.
\end{proof}

Together with Theorem \ref{theo 1}, we know that if we set $\tilde d=d_0$, under Assumption \ref{monotone}, IVFS would converge to the minimal score feature set. For instance, in the case of RKNN, the selected features would minimize the KNN error rate.

\subsubsection{Choosing $\tilde d$.} In practice (when $k\ll{d \choose d_0}$), we are actually drawing random samples from the population, and use feature scores as estimation of true inclusion values. An interesting fact is that, IVFS actually uses $\hat{IV}_{\tilde d}$ to estimate $IV_{d_0}$. This makes sense since 1) we expect features with high $IV_{\tilde d}$ to have high $IV_{d_0}$ as well, and 2) we care more about the rank of feature scores rather than the exact values. Hence, setting $\tilde d>d_0$ may not defect the model performance.

\subsubsection{Choosing $\tilde n$.} We also have a parameter $\tilde n$ which controls the number of random observations for each subset. A relatively small $\tilde n$ is extremely helpful to accelerate the algorithm for scalable implementation on large datasets. In existing methods, however, sub-sampling is not commonly used.

\begin{itemize}
    \item In the original proposal of random forest \cite{breiman2001random}, the authors suggested to set $\tilde n=n$ with replacement. Later on, a popular variant \cite{GeurtsEW06} chose to disable sub-sampling procedure. In general, in all variants of random forest, sub-sampling is not recommended \cite{TangGL18}.

    \item In RKNN \cite{li2011random}, the author did not consider sub-sampling training points, either.
\end{itemize}
For supervised learning, the above phenomenon seems reasonable, since sub-sampling the training data may harm the learning capacity of each random subset, especially with high dimensions. However, as shall be seen from next sections, when applying IVFS for the purpose of unsupervised topology preservation, we can choose a very small $\tilde n$ without loosing much capacity. This makes IVFS a strong candidate for dealing with large scale datasets. In general, a good choice of $\tilde n$ depends on the specific problem.

\subsubsection{Advantages of IVFS.} On a high level, IVFS has the following nice features:
\begin{itemize}
    \item Intuitive formulation, no complicated computation.

    \item IVFS can well handle the problems where the optimization problem is very hard to solve (or intractable), as long as the loss function can be computed efficiently.

    \item IVFS can be applied to large datasets efficiently by using a small $\tilde n$, which is feasible for some applications.
\end{itemize}
In the following sections, we design a topology preserving feature selection algorithm by combining IVFS framework and ideas from topological data analysis (TDA).

\section{Preliminaries on Computational Topology}

In this section, we provide some intuition to several important concepts in computational topology. Interested readers are referred to \cite{edelsbrunner2010computational} for more detailed introduction. A $p$-dimensional simplex is defined as $$\gamma_p=\{\theta_0x_0+...+\theta_p x_p|\theta_i>0\ \forall i,\sum_{i=0}^p\theta_i=1 \}, $$
where $x_0,...,x_p$ are affinely independent points in $\mathbbm R^p$. For instance, a 1-simplex is a line segment, and a 2-simplex is a triangle, etc. A \textit{simplicial complex} $\mathcal C$ is then formed by gluing simplices in different dimensions together. In Euclidean space, the most commonly used complex is the \textit{Vietoris-Rips complex}, with an example given in Figure \ref{fig1}. It is formed by connecting points with distance smaller than a given threshold $\alpha$. If we gradually increase $\alpha$ from 0 to $\infty$, the number of edges will increase from 0 to $n^2$ eventually. The distance associated with each edge, is called the \textit{filtration} for Rips complex. As $\alpha$ increases, topological features with different dimension (e.g., 0 for  connected components, 1 for loops, 2 for voids, etc.) will appear and disappear. We call the pair of birth and death time (the $\alpha$ value) of a $p$-dimensional topological feature as a $p$-dimensional \textit{persistent barcode}. The $p$-dimensional \textit{persistent diagram} is a multiset of all these barcodes. An example persistent diagram is plotted in Figure \ref{fig1} right panel. Note that we can always normalize the filtration function to be bounded in $[0,1]$. Often, barcodes with length less than a small number $\epsilon$ are regarded as noise and eliminated from the diagram. In many applications, useful features are then retrieved from persistent diagrams (e.g., persistent image \cite{Adams_image2017} and persistent landscape \cite{bubenik2015landscape}) as inputs fed into learning machines.

\begin{figure}[h!]
\mbox{\hspace{-0.27in}\vspace{-0.2in}
			\includegraphics[width=1.85in]{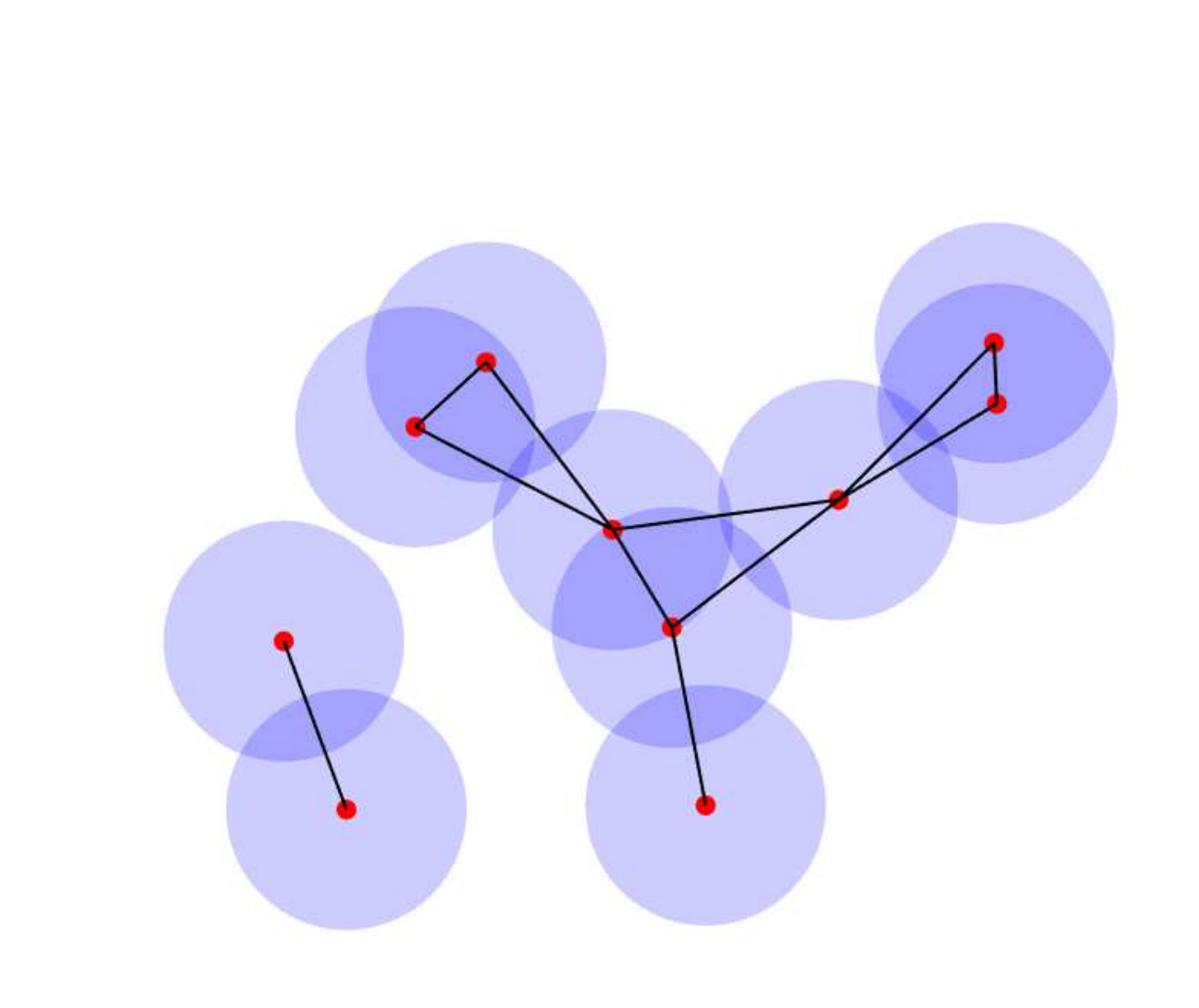} \hspace{-0.15in}
			\includegraphics[width=1.85in]{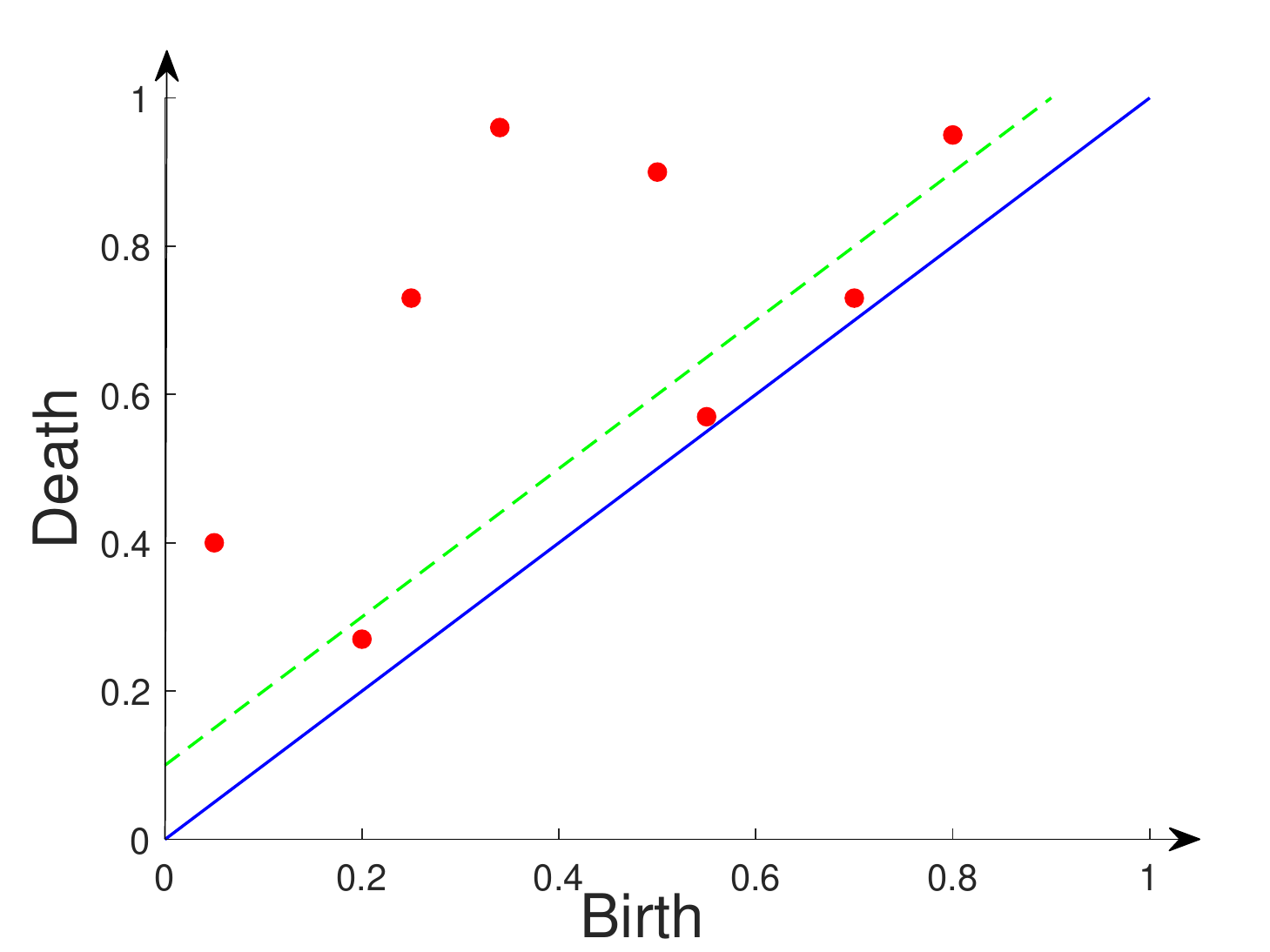}
		}
	\vspace{-0.15in}
	\caption{Left panel: an example of Rips complex. Points that are close are connected. Dots, lines and triangles forms 0, 1 and 2-dimensional complex respectively. Right panel: an example of persistent diagram. Each red point represents a (birth, death) time of a topological feature. The green line is a threshold: only barcodes that are stable (points above the green line) are included in the final barcode set.}
	\label{fig1}\vspace{-0.in}
\end{figure}

\section{Topology Preservation via IVFS}

In most applications of TDA, the first step is to generate a persistent diagram summarizing the topological patterns. When feature selection is adopted in TDA, the persistent diagram should be accurately preserved. In this section, we propose a variant under IVFS scheme that achieves this goal.

\subsection{Distance measure}

In our study, we focus on Rips complex which is widely used for real-valued data. The filtration of Rips complex is based on distances between data points (cf. Figure \ref{fig1}). In this paper, we will mainly focus on Euclidean distance. Consider a distance matrix $(D_{ij})$, with the $(i,j)$-th entry defined as
$D_{ij}=\Vert x_i-x_j\Vert_2,$ where $x_i$ and $x_j$ are two sample points, and $\Vert\cdot\Vert_2$ is the $l_2$ norm for vectors. We divide $D$ by its largest entry to normalize all distances to $[0,1]$. Other similarities such as cosine and generalized min-max (GMM)~\cite{Proc:Li_KDD17,Proc:Li_Zhang_WWW17} can also be adopted.


\subsection{Distances between persistent diagrams}

Recall that our objective involves minimizing the difference between persistent diagrams. The following two distances measures between diagrams are widely used in TDA.

\begin{definition}
	For two persistent diagrams $\Psi$ and $\Gamma$, define \textit{\textbf{Wasserstein distance ($w_p^q$)}} and \textit{\textbf{Bottleneck distance ($w_\infty$)}}
	$$w_p^q(\Psi,\Gamma)=\inf_\phi(\sum_{x\in \Psi}\Vert x-\phi(x)\Vert_q^p)^{\frac{1}{p}},$$
	$$w_\infty(\Psi,\Gamma)=\inf_\phi \sup_{x\in\Psi}\Vert \phi(x)\Vert_\infty,$$
	where $\phi$ is taken over all bijections $\Psi\rightarrow\Gamma$ and $p,q\in \mathbb N$.
\end{definition}

\subsection{Objective function}

Now we are ready to formally state our objective function. Recall the notations $\mathcal I=\{1,2,...,d\}$, and $\mathcal M_{\tilde d}=\{\sigma\in \mathcal{I}^{\tilde d}:\sigma_i\neq \sigma_j\ for\ \forall i\neq j\}$. To achieve topology preserving feature selection, we minimize following loss function,
\begin{equation}
\min_{\mathcal F\in \mathcal M_{d_0}}\ w_*(\mathcal D(X),\mathcal D(X_{\mathcal F})), \label{TDA obj}
\end{equation}
where $w_*$ denotes Wasserstein or bottleneck distance. This way, we find the subset that best preserves the topological signatures of original data. However, the mapping between feature space and persistent diagram is so sophisticated that analytical approach to (\ref{TDA obj}) is hard to derive. This is exactly the circumstance where IVFS is particularly effective.

Note that the computational cost to generate persistent diagram is  non-negligible even for data of moderate size. Hence, directly applying IVFS with objective loss (\ref{TDA obj}) would be extremely slow. To this end, we leverage from the stability property of persistent diagrams to propose an alternative solution. We re-state the theorems as below.

\begin{theorem}\label{stability1}
\cite{chazal2014persistence}
	Suppose $X$ is the point set, $f$ and $f'$ are two Lipschitz functions. Let $\mathcal D(X,f)$ and $\mathcal D(X,f')$ denote the persistent diagram built upon $X$ using filtration function $f$ and $f'$, respectively. Under some technical conditions, the bottleneck and Wasserstein distances are bounded by
	$$w_\infty(\mathcal D(X,f),\mathcal D(X,f'))\leq C_1\Vert f-f'\Vert_\infty,$$
	$$w_p^w(\mathcal D(X,f),\mathcal D(X,f'))\leq C_2\Vert f-f'\Vert_\infty^{C_3},$$
	where $C_1$, $C_2$ and $C_3$ are a universal constant independent of $X$, and $\|\cdot\|_\infty$ refers to the infinite norm.
\end{theorem}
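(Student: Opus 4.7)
The plan is to reduce both inequalities to the algebraic stability theorem for persistence modules, following Cohen-Steiner--Edelsbrunner--Harer for the bottleneck case and its extension by Cohen-Steiner--Edelsbrunner--Harer--Mileyko for Wasserstein. The starting point is to turn the hypothesis $\epsilon:=\Vert f-f'\Vert_\infty$ into a geometric relation between filtrations: for every threshold $\alpha$, one has $\{x:f(x)\leq \alpha\}\subseteq \{x:f'(x)\leq \alpha+\epsilon\}\subseteq \{x:f(x)\leq \alpha+2\epsilon\}$ together with the symmetric inclusions obtained by swapping $f$ and $f'$. Applying the simplicial (Vietoris--Rips) construction and then the persistent homology functor turns these inclusions into an $\epsilon$-interleaving of the two persistence modules underlying $\mathcal D(X,f)$ and $\mathcal D(X,f')$.

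For the bottleneck distance I would then invoke the isometry theorem of Chazal--de Silva--Glisse--Oudot (refined by Bauer--Lesnick): for q-tame persistence modules, the interleaving distance agrees with the bottleneck distance between the associated persistence diagrams. Applied to the $\epsilon$-interleaving constructed above, this immediately yields $w_\infty(\mathcal D(X,f),\mathcal D(X,f'))\leq \epsilon$, proving the first inequality with $C_1=1$. The unspecified ``technical conditions'' of the statement are essentially q-tameness, which holds automatically here because $X$ is a finite point set, so the Rips filtration has finitely generated persistent homology in every degree.

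For the Wasserstein inequality I would deploy the quantitative stability theorem. The key step is a uniform bound on the total $p$-persistence $\sum_{b\in\mathcal D}(\mathrm{death}(b)-\mathrm{birth}(b))^p$ of both diagrams in terms of the diameter of $X$ and a tameness exponent. Combining this bound with the interleaving through a H\"older-type interpolation between the bottleneck cost on long bars and the sheer count of short bars yields $w_p^q(\mathcal D(X,f),\mathcal D(X,f'))\leq C_2\,\epsilon^{C_3}$, where $C_3$ is the interpolation exponent (depending on $p$ and the tameness exponent) and $C_2$ absorbs the uniform total-persistence bound together with the universal constants from the isometry theorem.

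The main obstacle is the Wasserstein case. The bottleneck step is essentially a direct corollary of the interleaving construction plus the isometry theorem, but controlling a sum of $p$-th powers requires ruling out an accumulation of medium-length bars of comparable size, which the interleaving argument alone does not preclude. This is precisely where the unspecified technical assumptions must enter, typically as a bound on the total persistence or as a geometric condition on $X$ (for instance a bound on its upper box-counting dimension) that controls how many barcodes can appear at each length scale. Without such a hypothesis, one cannot hope for any bound of the stated power-law form, so pinning down the right tameness condition is the real work of the proof.
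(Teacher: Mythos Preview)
The paper does not supply a proof of this theorem; it is merely restated from \cite{chazal2014persistence} (and implicitly from Cohen-Steiner--Edelsbrunner--Harer and its Wasserstein extension), so there is no proof in the paper to compare against. Your sketch is a faithful summary of how those cited works establish the result: building an $\epsilon$-interleaving from the sublevel-set inclusions, invoking the isometry/algebraic stability theorem for the bottleneck bound, and then using a total-persistence bound plus interpolation for the Wasserstein case. You are also right that the ``technical conditions'' are exactly tameness hypotheses (bounded total persistence, or geometric conditions such as finite upper box dimension) needed to control the accumulation of medium-length bars in the Wasserstein argument.
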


In words, the theorem says that when we change filtration from $f$ to $f'$, the change in persistent diagrams would be bounded linearly in the $\|f-f'\|_\infty$ for Bottleneck distance, and polynomially for Wasserstein distance. Since the filtration for Rips complex is the pairwise distances, we can alternatively control the $l_\infty$ norm of the difference between two distance matrices. Thus, we substitute our objective to
\begin{equation}
\min_{\mathcal F\in \mathcal M_{d_0}}\ \Vert D_{\mathcal F}-D\Vert_\infty,  \label{TDA obj2}
\end{equation}
where $\Vert A\Vert_\infty=\max_{i,j}|A_{ij}|$, $D$ and $D_{\mathcal F}$ are the distance matrix before and after feature selection. By (\ref{TDA obj2}), we get rid of the expensive computation of persistent diagrams, making the algorithm applicable to real world applications. Nevertheless, optimization regarding $l_\infty$ is still non-trivial.

\subsection{IVFS-$l_\infty$ Algorithm and Extensions}


The IVFS scheme (Algorithm \ref{alg:IVFS}) can be directly adapted to the topology preservation problem (\ref{TDA obj2}). At line 6, we substitute score function as
\begin{equation}
    s_{\mathcal F}(f)=-\Vert D_{\mathcal F}-D\Vert_\infty, \quad \forall f\in\mathcal F,
\end{equation}
and all other steps remain the same. This is called the \textbf{IVFS-$l_\infty$} algorithm.

\subsubsection{Extensions.} We can easily extend this algorithm to other reasonable loss functions. Denote $\Vert D-D_{\mathcal F}\Vert_1=\sum_{i,j}|D_{ij}-D_{{\mathcal F}_{ij}}|$ and $\Vert D-D_{\mathcal F}\Vert_2$ the matrix Frobinius norm. One should expect that minimizing these two norms of $(D-D_{\mathcal F})$ to be good alternatives, because small $\Vert D-D_{\mathcal F}\Vert_1$ or $\Vert D-D_{\mathcal F}\Vert_2$ is likely to result in a small $\Vert D_{\mathcal F}-D\Vert_\infty$ as well. We will call these two methods \textbf{IVFS-$l_1$} and \textbf{IVFS-$l_2$} algorithm respectively. Both extensions are implemented simply by changing the lines in Algorithm \ref{alg:IVFS} with corresponding loss and score function.

\section{Experiments}

Following many previous works on feature selection ($e.g$ \cite{zhao2007spectral,liu2014global,Proc:Li18-UPFS}), we carry out extensive experiments on popular high-dimensional datasets from UCI repository \cite{asuncion2007uci} and ASU feature selection database \cite{li2018feature}. The summary statistics are provided in Table \ref{euclidean_table}. For all datasets, the features are standardized to mean zero and unit variance, and at most 300 features are selected.



\subsection{Methods and tuning parameters}

We compare several popular similarity preserving methods:

$\bullet$ \textbf{SPEC}: We use the second type  algorithm which performs the best, according to \cite{zhao2007spectral}.

$\bullet$ \textbf{MCFS}: As guided in (Cai $et\ al.$, 2010), we run MCFS with number of clusters $M=\{5,10,20,30\}$.

$\bullet$ \textbf{GLSPFS}: It is a  embedded method that learns to approximate the sample similarity matrix and  will serve as a major baseline. Following~\cite{liu2014global}, we chose the parameter combinations of $\mu=\{0,10^{-1:3}\}$, $\lambda=\{10^{-1:3}\}$.

$\bullet$ \textbf{IVFS-$l_{\infty}$} and variants: We try following combinations: $\tilde d=\{0.1:0.1:0.5\}\times d$, $\tilde n=\{100,0.1n,0.3n,0.5n\}$. We run experiments with $k=1000,3000,5000$. All the reported results are averaged over 5 repetitions.

\subsection{Evaluation Metrics}

We compare each method by various widely adopted metrics that can well evaluate the quality of selected feature set.\footnote{We also tested normalized mutual information (NMI) in the experiment. The pattern was very similar to KNN accuracy.}

\textbf{KNN accuracy.} Following~\cite{zhao2007spectral,liu2014global}, etc., we test local structure preservation by KNN classification. Each dataset is randomly split into 80\% training sets and 20\% test set, on which the test accuracy is computed. We repeat the procedure 10 times and take the average accuracy. For number of neighbors, we adopt $K\in\{1,3,5,10\}$ and report the highest mean accuracy.

\textbf{Distances between persistent diagrams.} For $X$ and $X_{\mathcal F}$, we compute the $1$-dimensional persistent diagram $\mathcal D$ and $\mathcal D_{\mathcal F}$ with $\alpha=0.5$ and drop all barcodes with existing time less than $\epsilon=0.1$. Wasserstein ($w_1^1$) and Bottleneck ($w_\infty$) distances are computed between the diagrams.

\textbf{Norms between distance matrix.} When the purpose of feature selection is to preserve the sample distance (or similarity), one straightforward measure should be the change between distance matrices $D$ and $D_{\mathcal F}$. We compute $\Vert D-D_{\mathcal F}\Vert_1$, $\Vert D-D_{\mathcal F}\Vert_2$ and $\Vert D-D_{\mathcal F}\Vert_\infty$ to evaluate the closeness of these two matrices.

\textbf{Running time.} We compare the running time for a single run, with fixed parameter setting: \textbf{MCFS}: $k=10$, \textbf{GLSPFS}: $\mu=1$, $\lambda=1$, and \textbf{IVFS}: $k=1000$, $\tilde n=0.1n$, $\tilde d=0.3d$.

\begin{table*}[h!]
	\caption{Experiments on  high dimensional data using normalized Euclidean distance. \# n is the number of samples, \# d is the dimensionality and \# C is the number of classes. The unit of $L_1/n^2$ is $(\times 10^{-2})$. If $n<1000$, $\tilde n=0.1n$; otherwise, $\tilde n=100$. \vspace{-0.1in}}\label{euclidean_table}
	\begin{center}
		{ \fontsize{10}{10.5}\selectfont
			\begin{tabular}{c|lll|l|llllclc}
				\hline
				\textbf{Dataset}            & \textbf{\#n}        & \textbf{\#d}    & \textbf{\#C}          & \textbf{Methods}         & \textbf{KNN}  & $\bf{w_1^1}$ & $\bf{w_\infty}$ & $\bf{L_\infty}$ & $\bf{L_1/n^2}$ & $\bf{L_2}$ & \textbf{Time (Sec)} \\ \hline
			\multirow{4}{*}{CLL-SUB-111} & \multirow{4}{*}{111} & \multirow{4}{*}{11340}  & \multirow{4}{*}{3}
				& SPEC   &  58.2\%    &   2.35 &  0.05 & 0.42 & 9.49   &  13.86   & 0.85 \\
				&                      &                       &
				& MCFS    &  63.9\%     &  2.91  &   0.04  & 0.21 & 4.19 & 5.85 & 3.35 \\
				&                                   &                       &
				& GLSPFS &  63.9\%   &  2.44 &  \textbf{0.02} &  0.30 &6.89  & 9.58 & 11.29\\
				&                              &                       &
				& IVFS-$l_\infty 1000$ &  \textbf{68.7\%}  & \textbf{2.09}  &  \textbf{0.02} &  \textbf{0.11} & \textbf{2.81} &   \textbf{3.94} & 5.27  \\ \hline

				\multirow{4}{*}{Lymphoma} & \multirow{4}{*}{96} & \multirow{4}{*}{4026}  & \multirow{4}{*}{9}

				& SPEC   &  81.5\%   &   0.31 &  0.05 & 0.25 & 6.38  &  7.64  & 0.28  \\
				&                      &                       &
				& MCFS    &  92.5\%   &  0.09 &   0.02  & 0.14 & 2.46 & 2.95 & 1.51 \\
				&                                   &                       &
                & GLSPFS &  92.0\%  &  0.12  &  0.03 &  0.19  & 4.22 & 5.10  & 1.50 \\
				&                                   &                       &
				& IVFS-$l_\infty 1000$ &  \textbf{94.0\%}  &  \textbf{0.06}  & \textbf{0.01}   & \textbf{0.08}  & \textbf{1.90} & \textbf{2.30} & 1.28 \\ \hline
				
               \multirow{4}{*}{Orlraws10P} & \multirow{4}{*}{100} & \multirow{4}{*}{10304}  & \multirow{4}{*}{10}

				& SPEC   &  81.5\%    &   0.92  &  0.03 & 0.49 & 13.21  &  16.51 & 0.71 \\
				&                      &                       &
				& MCFS    &  92.0\%   &  0.50  &   0.03  & 0.20 & 4.63 & 5.90 & 2.69 \\
				&                                   &                       &
				& GLSPFS &  93.5\%   &  0.66  &  \textbf{0.02} &  0.24 & 5.73 & 7.13 & 12.49\\
				&                                   &                       &
				& IVFS-$l_\infty 1000$  & \textbf{98.0\%} & \textbf{0.47}  & \textbf{0.02} &  \textbf{0.08} & \textbf{1.85} & \textbf{2.35}  & 4.08 \\ \hline

				\multirow{4}{*}{Pixrow10P} & \multirow{4}{*}{100} & \multirow{4}{*}{10000}  & \multirow{4}{*}{10}

				& SPEC   &  98.0\%   &   2.43  &  0.07 & 0.57  &13.81 &  17.29  & 0.68 \\
				&                      &                       &
				& MCFS    &  99.0\%    &  1.76 &   0.05  & 0.29 & 6.70 & 8.43 & 2.64\\
				&                                   &                       &
				& GLSPFS &  99.0\%   &  1.41  &  0.05 &  0.26 & 6.48 & 8.02 & 14.68\\
				&                                   &                       &
				& IVFS-$l_\infty 1000$ &  \textbf{100\%}  &  \textbf{0.60}  &   \textbf{0.03}  &  \textbf{0.07}  & \textbf{2.03} & \textbf{2.50} & 3.71 \\ \hline
				
			\multirow{4}{*}{Prostate-GE} & \multirow{4}{*}{102} & \multirow{4}{*}{5966}  & \multirow{4}{*}{2}

				& SPEC   &  73.3\%   & 3.70   &   0.09  &  0.38 & 12.39  &  15.37  &0.43  \\
				&                      &                       &
				& MCFS    &  81.9\%   & 0.94    &  0.05  &   0.22 & 5.06 & 6.27 &2.01 \\
				&                                   &                       &
				& GLSPFS &  83.8\%   & 0.64  &  0.03  &  0.27 &  4.61 & 7.39 & 4.45\\
				&                              &                       &
				& IVFS-$l_\infty 1000$ &  \textbf{87.6\%}  & \textbf{0.40} & \textbf{0.02}  &  \textbf{0.06} &  \textbf{1.44} &    \textbf{1.96} & 1.89  \\ \hline
				\multirow{4}{*}{SMK-CAN-187} & \multirow{4}{*}{187} & \multirow{4}{*}{19993}  & \multirow{4}{*}{2}
				& SPEC   &  70.3\%    &   4.10  &  0.06 & 0.59 & 10.28   &  25.26  & 2.13 \\
				&                      &                       &
				& MCFS    &  66.1\%      &  0.99  &   0.03  & 0.29 & 3.82 & 9.91 & 7.06 \\
				&                                   &                       &
				& GLSPFS &  71.1\%  &  1.57 &  0.04 &  0.26 & 5.81 & 14.35 & 55.90\\
				&                              &                       &
				& IVFS-$l_\infty 1000$ &  \textbf{72.6\%}  & \textbf{0.58}  &  \textbf{0.02} &  \textbf{0.09} &  \textbf{2.00} & \textbf{3.86} & 9.62  \\ \hline

				\multirow{4}{*}{WarpPIE10P} & \multirow{4}{*}{130} & \multirow{4}{*}{2400}  & \multirow{4}{*}{10}

				& SPEC   &  85.5\%   &   1.07  &  0.06 & 0.44 & 21.72 &  22.31  & 1.30   \\
				&                      &                       &
				& MCFS    &  95.7\%  &  0.60  &   0.05  & 0.14 & 5.90 & 6.14 & 1.49 \\
				&                                   &                       &
				& GLSPFS &  90.0\%   &  0.68  &  0.05 &  0.16 & 7.10 & 7.39 & 2.03\\
				&                                   &                       &
				& IVFS-$l_\infty 1000$ &  \textbf{95.9\%}  & \textbf{0.50} &   \textbf{0.04}   &  \textbf{0.05}  & \textbf{2.60} & \textbf{3.04} & 1.11\\ \hline

               \multirow{4}{*}{COIL20} & \multirow{4}{*}{1440} & \multirow{4}{*}{1024}  & \multirow{4}{*}{20}
				& SPEC   &  98.2\%    &   16.07  &  0.07 & 0.35 & 16.96  &  258.5  & 17.02 \\
				&                      &                       &
				& MCFS    &  99.6\%     &  3.50  &   0.08  & 0.30 & 2.63  & 42.43 & 3.92\\
				&                                   &                       &
				& GLSPFS &  99.9\%   &  \textbf{1.78}  &  0.06 &  0.20 & 2.02  &40.60 & 8.98\\
				&                              &                       &
				& IVFS-$l_\infty 1000$ &  \textbf{100\%}  & 2.16  &  \textbf{0.05} &  \textbf{0.18} & \textbf{1.41}  &  \textbf{25.66}  & 3.86 \\ \hline
				
               \multirow{4}{*}{Isolet} & \multirow{4}{*}{1560} & \multirow{4}{*}{617}  & \multirow{4}{*}{26}

				& SPEC   &  81.6\%    &  63.85  &  0.08 & 0.43  & 11.96 &  209.9   & 12.86 \\
				&                      &                       &
				& MCFS    &  82.3\%    &  20.18 &   0.04 & 0.18 & 2.47  & 52.14  & 2.20\\
				&                                   &                       &
				& GLSPFS &  88.2\%  &  11.64  &  0.03 &  0.13 & 1.94  & 38.26 & 7.34\\
				&                              &                       &
				& IVFS-$l_\infty 1000$ &  \textbf{88.7\%}  & \textbf{2.09}  &  \textbf{0.01} &  \textbf{0.08} &  \textbf{1.37} &  \textbf{27.04} & 2.78  \\ \hline
				
				\multirow{4}{*}{RELATHE} & \multirow{4}{*}{1427} & \multirow{4}{*}{4322}  & \multirow{4}{*}{2}
				& SPEC   &  72.8\%   & 36.04   &  0.10  &  0.83 & 15.08   &  263.6  & 69.20 \\
				&                      &                       &
				& MCFS    &  68.8\%   & 9.68    &  0.08 &   0.35  & 8.43 & 142.6 & 9.34 \\
				&                                   &                       &
				& GLSPFS &  72.5\%   & 7.32  &  0.07  &  0.32 &  5.21 & 94.32 & 30.46\\
				&                              &                       &
				& IVFS-$l_\infty 1000$ & \textbf{75.6\%}  & \textbf{0.70} & \textbf{0.05}  &  \textbf{0.24} &  \textbf{1.90} &    \textbf{40.90}  & 8.87 \\ \hline
				
			\end{tabular}
		}
	\end{center}
	\vspace{-0.2in}
\end{table*}

\subsection{Results}

\subsubsection{Overall performance.} Table \ref{euclidean_table} summarizes the results. All the datasets are high-dimensional, and \textit{Isolet, RELATHE, COIL20} have relatively larger size with around 1500 samples. For algorithms with tuning parameters, we report the best result among all parameters and number of selected features for each metric. From Table \ref{euclidean_table}, we observe:
\begin{itemize}
    \item IVFS-$l_\infty$ provides smallest $w_1^1$, and $w_\infty$ on almost all datasets. Moreover, the $l_1$, $l_2$ and $l_\infty$ norms are significantly reduced on all the data---The distance and topology preserving capability is essentially improved.

    \item On all datasets, IVFS-$l_\infty$ also beats other methods in terms of KNN accuracy, which indicates its superiority on supervised tasks and local manifold preservation.
\end{itemize}

\subsubsection{Robustness.} We plot $\Vert D-D_{\mathcal F}\Vert_2$ and $ w_\infty(\mathcal D,\mathcal D_{\mathcal F})$ against the number of selected features in Figure \ref{fig-l2} and Figure \ref{fig-bottle}, respectively. SPEC performs very poorly on high-dimensional datasets. We observe clearly a trend that IVFS keeps lifting its performance as number of features increases. This robustness comes from the fact that the inclusion value intrinsically contains rich information about features interactions.

\subsubsection{Stability.} We bootstrap samples from original dataset to mimic the process of sampling from population. Denote $\mathcal F_X$ and $\mathcal F_B$ the subset chosen based on original data and bootstrap data respectively. We count $|G|$ with $G=\{f:f\in \mathcal F_X,\ f\centernot\in \mathcal F_B\}$. We use same parameters as for testing the running time. The results are averaged over 5 repetitions. In principle, the selected feature pool should not vary significantly if we only change a few samples (by bootstrap), since the ``truly'' important features should be independent of the samples. In Table \ref{stability}, we see that IVFS-$l_\infty$ is hardly affected by the bootstrapping process, while other methods are much more sensitive and gives very different solutions.

\begin{table}[h]
	\caption{Stability under bootstrap: the number of different selected features between original data and bootstrap data.}{\vspace{-0.1in}}
	\label{stability}
	\small{
		\begin{tabular}{c|cccc} \hline
			 & SPEC & MCFS & GLSPFS & IVFS-$l_\infty$1000 \\ \hline
            CLL-SUB-111  &  187.2    &   271.8   & 243.6 &  \textbf{12.2}   \\

			Lymphoma  &  156.6    &   260.8   & 260.4 &  \textbf{4.4}   \\

			Orlraws10P   & 119.2   &283.6 & 267.2  &  \textbf{7.8} \\
			
			Pixrow10P   & 143.0  & 284.8  & 266.0  & \textbf{6.8}   \\

            Prostate-GE  &  125.8    &   264.4   & 236.2 &  \textbf{12.0}   \\

			SMK-CAN-187  & 125.8 & 264.4 & 236.2 & \textbf{18.6}  \\
			
			WarpPIE10P   &  56.0    &   241.6   &   191.0 & \textbf{4.8}   \\
			
			COIL20  & 14.0 & 178.2 &115.4 & \textbf{5.6}  \\
			
			Isolet  & 12.8 & 124.5 &80.0 & \textbf{3.3} \\
			
			RELATHE  & 21.4 & 145.0 &102.8 & \textbf{8.4}  \\
			\hline
		\end{tabular}\vspace{-0.1in}
	}
\end{table}

\begin{figure}[h!]
    \vspace{-0.1in}
	\begin{center}{
		\mbox{
			\includegraphics[width=1.85in]{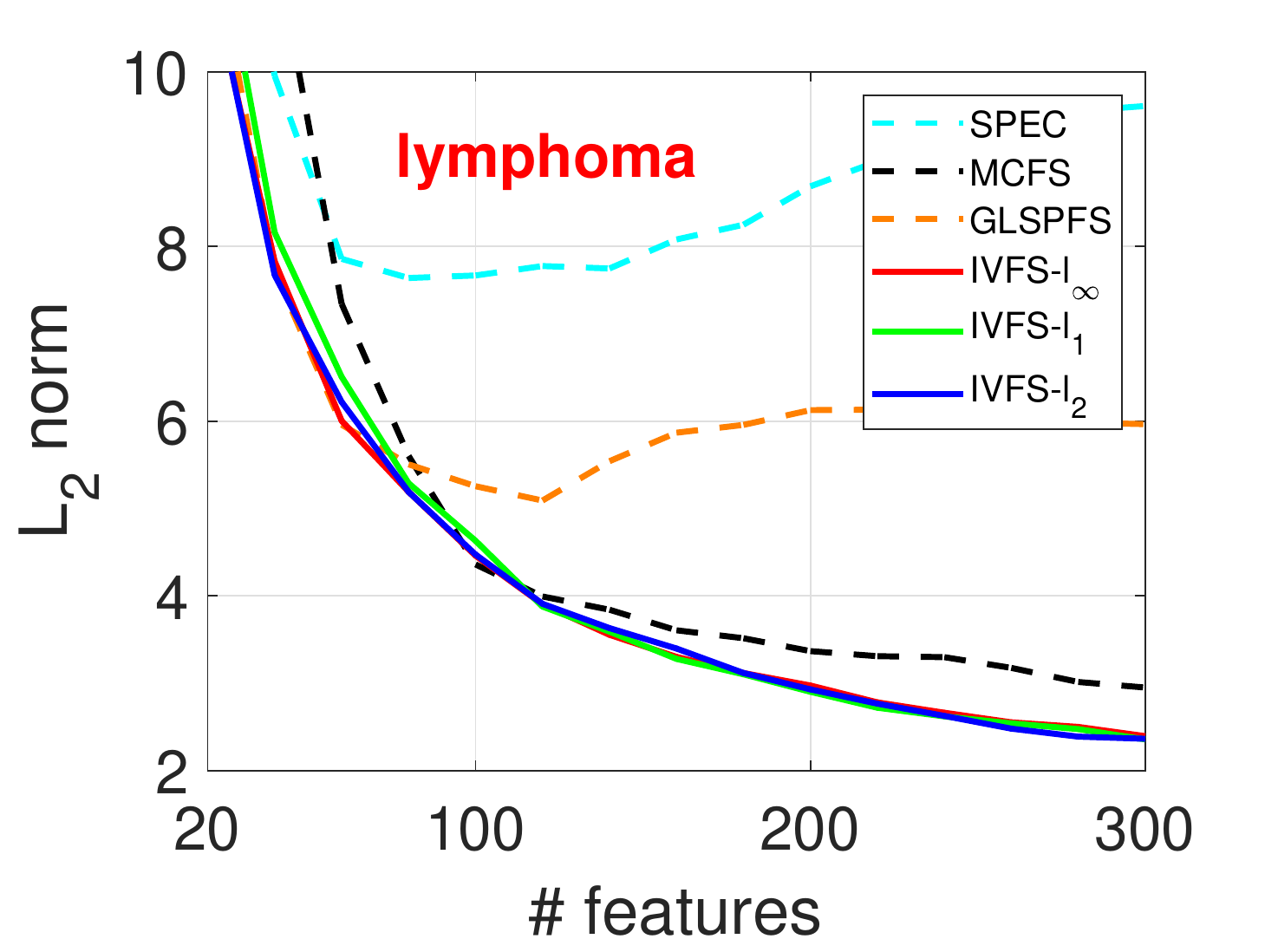} \hspace{-0.15in}
		    \includegraphics[width=1.85in]{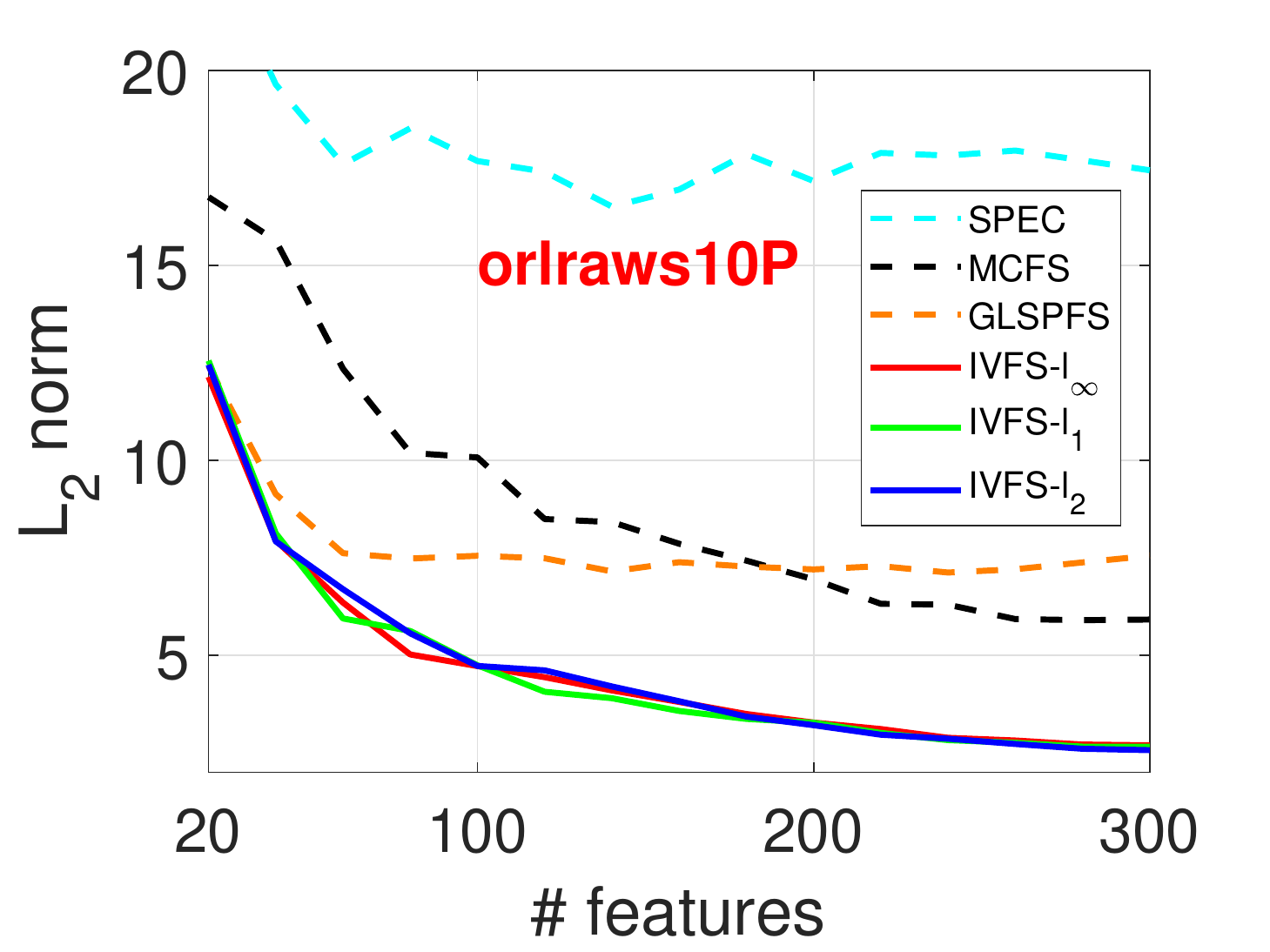} 	
		}
		\mbox{
			\includegraphics[width=1.85in]{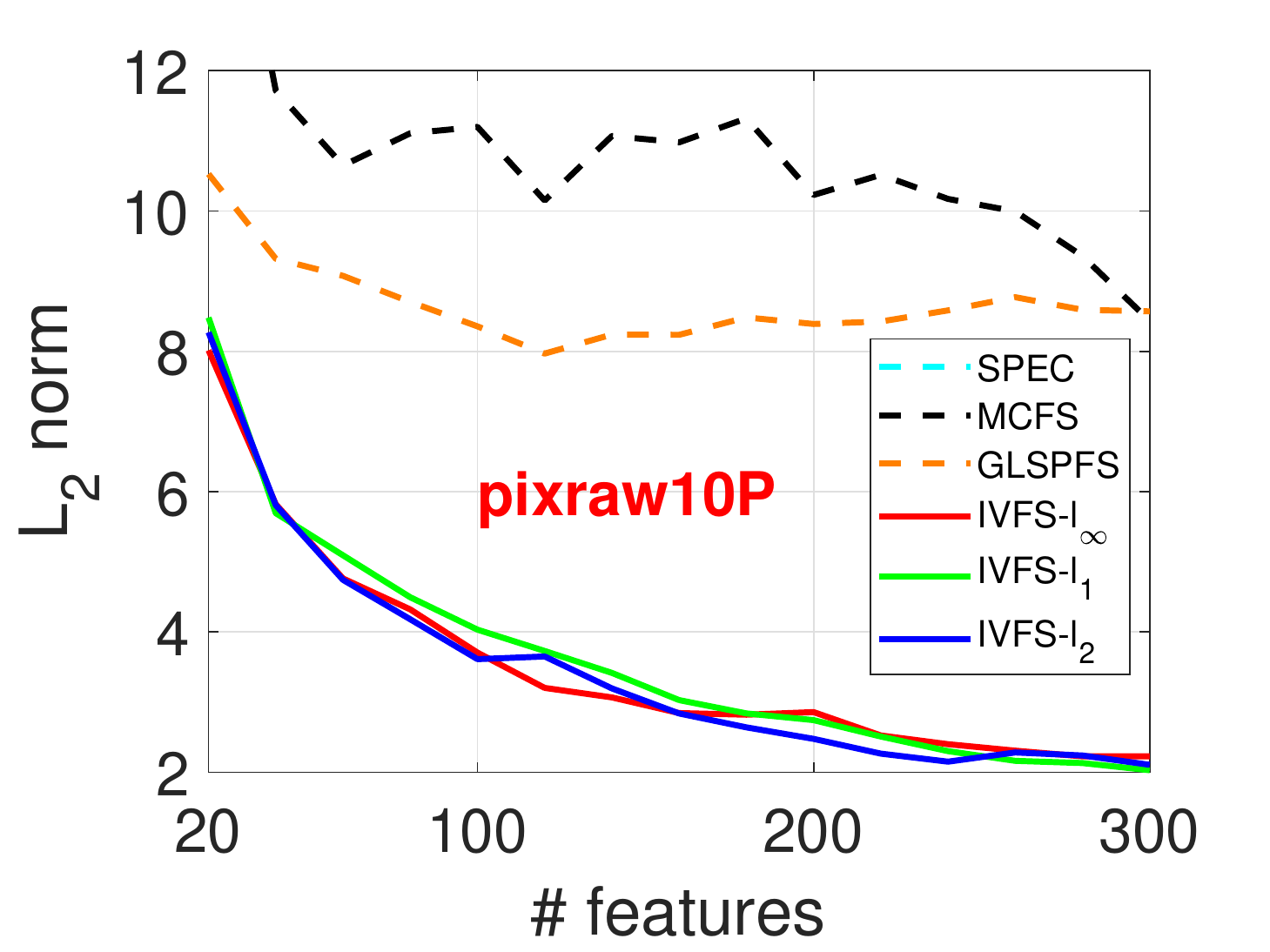}\hspace{-0.14in}
		    \includegraphics[width=1.85in]{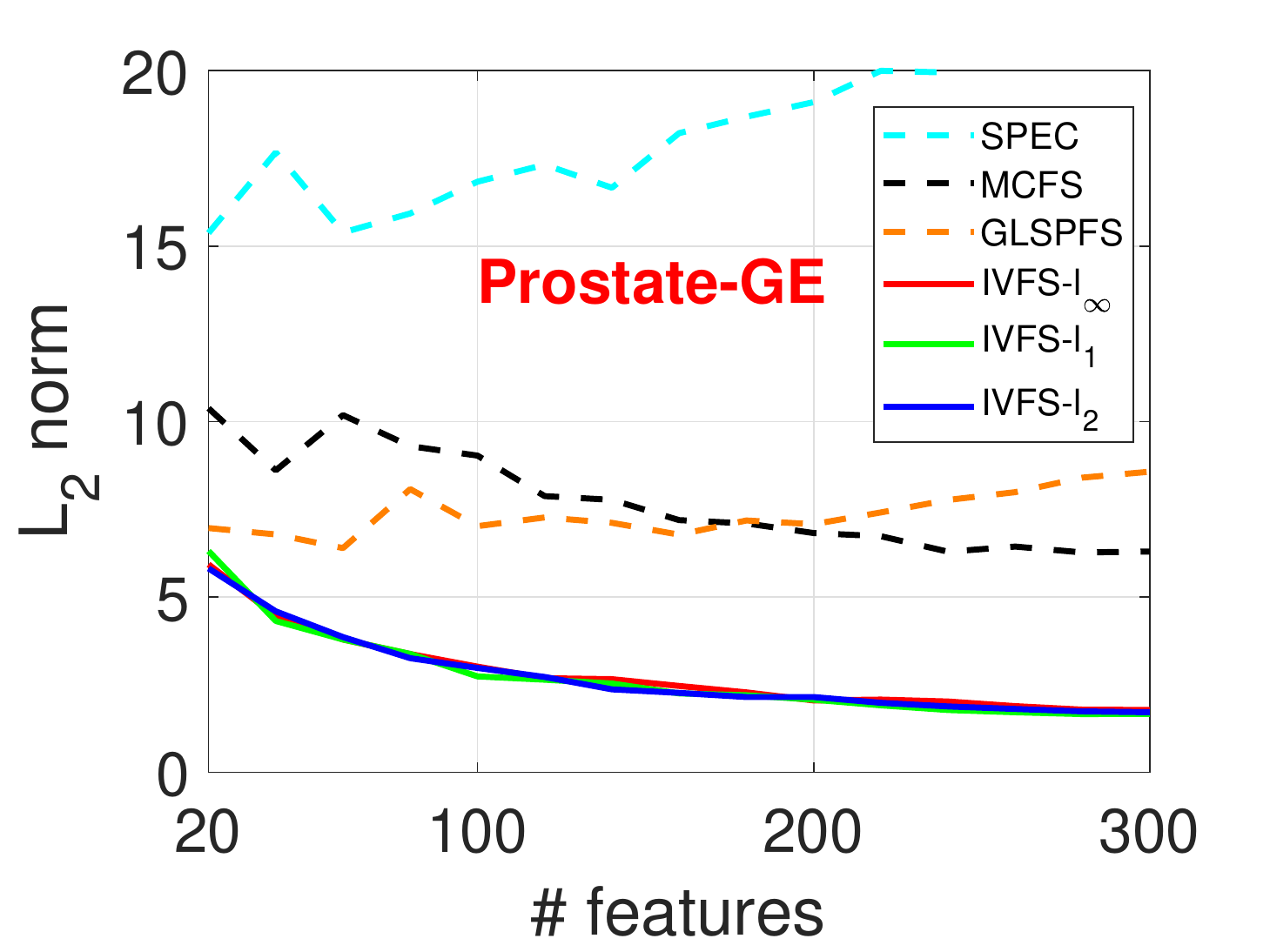}	
		}
		\mbox{
			\includegraphics[width=1.85in]{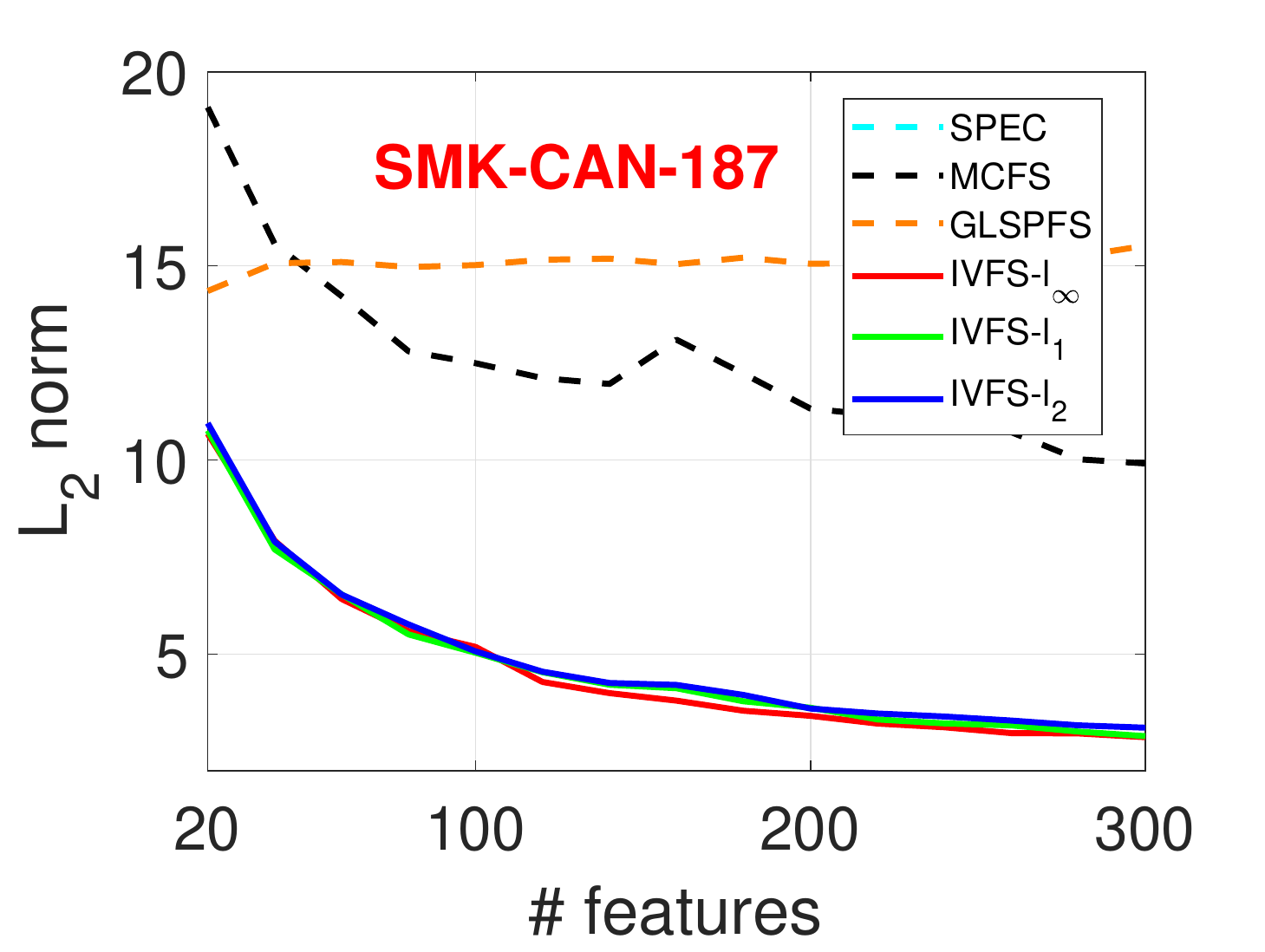} \hspace{-0.15in}
			\includegraphics[width=1.85in]{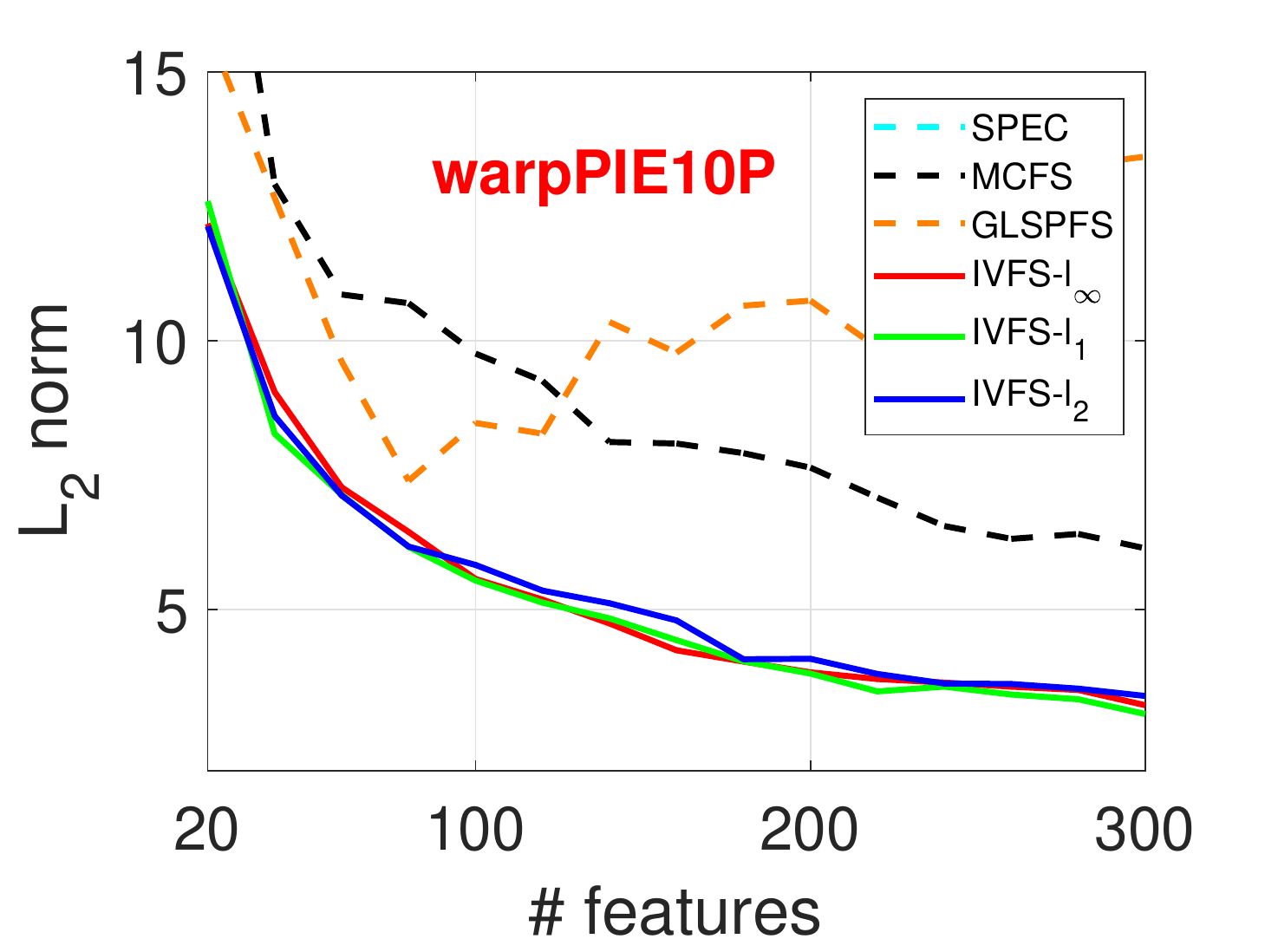}
		}
		}
	\end{center}
	\vspace{-0.15in}
	\caption{$l_2$ norm vs. number of features.}
	\label{fig-l2}\vspace{-0.1in}
\end{figure}

\begin{figure}[h!]
	\begin{center}{
		\mbox{
			\includegraphics[width=1.85in]{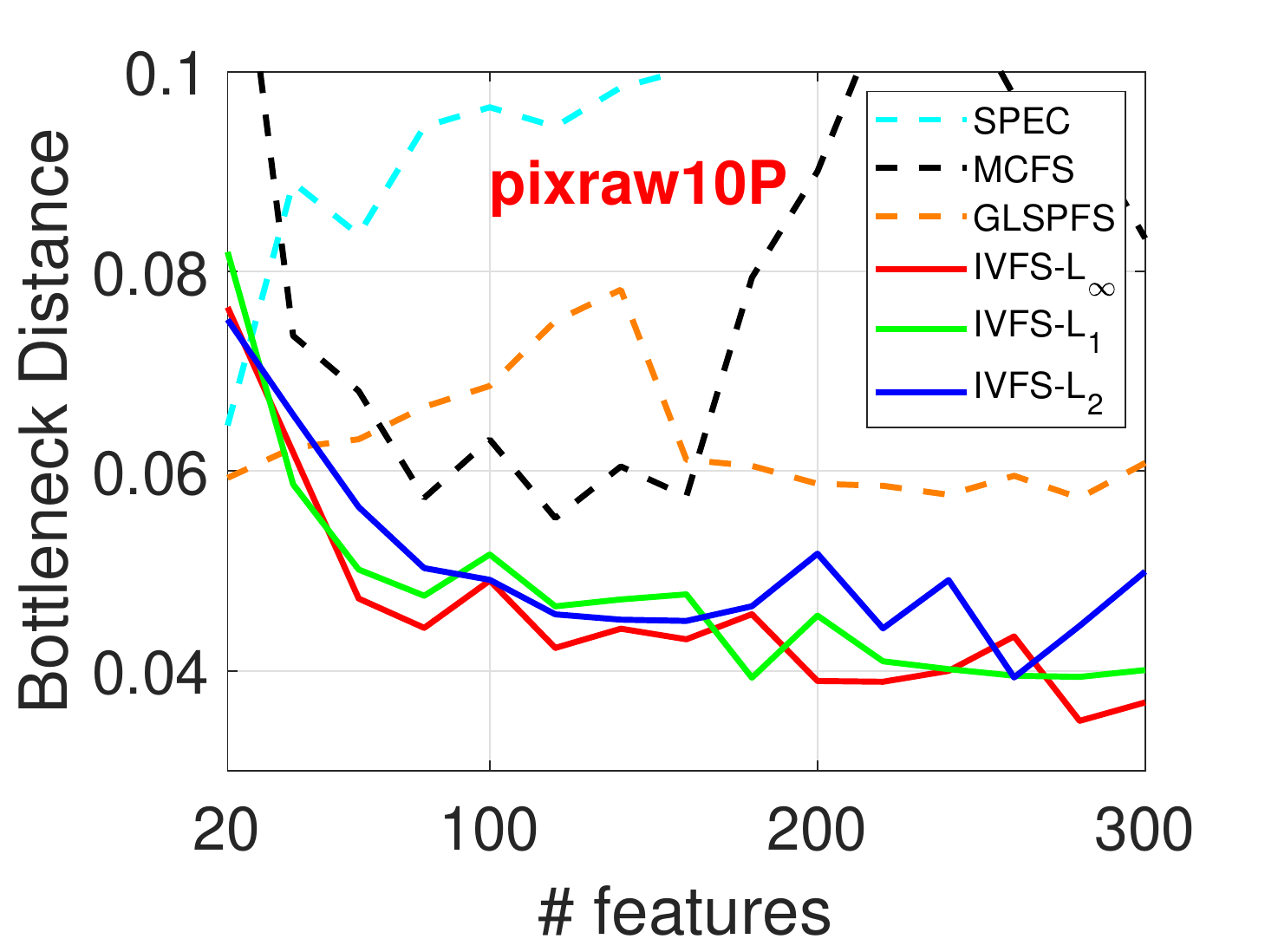} \hspace{-0.15in}
			\includegraphics[width=1.85in]{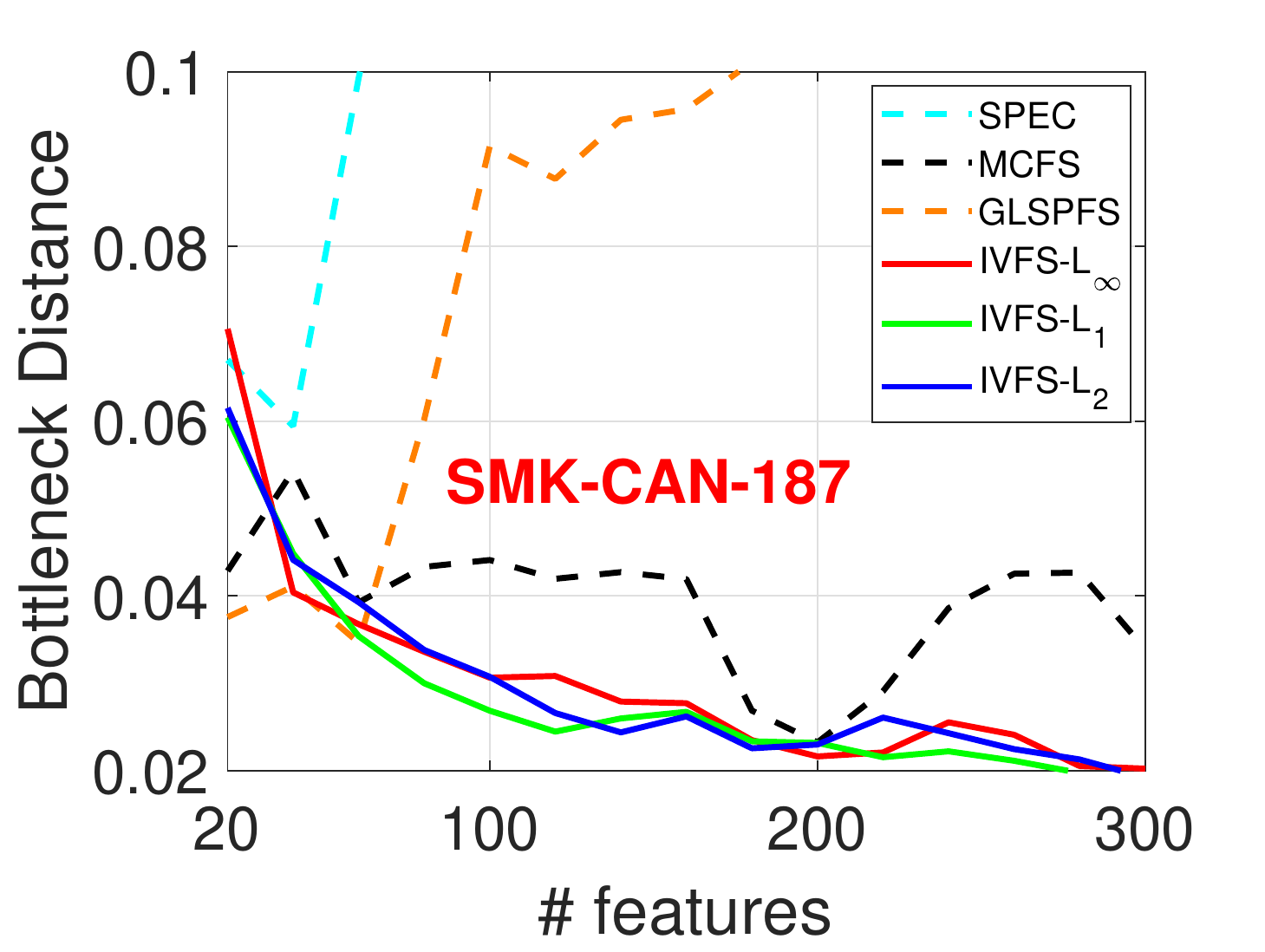}
		}
		}
	\end{center}
	\vspace{-0.15in}
	\caption{Bottleneck distance vs. number of features.}
	\label{fig-bottle}\vspace{-0.1in}
\end{figure}

\subsubsection{Efficiency-capacity trade-off.} From Table \ref{euclidean_table}, we also see that IVFS-$l_\infty$ is comparable with MCFS in terms of running time, and is more efficient than GLSPFS. In particular, GLSPFS may run very slowly when it takes a long time to converge (e.g., \textit{SMK-CAN-187}). Additionally, for datasets with large size, SPEC and GLSPFS gets slower due to large matrix inverse and singular value decomposition.

The computational cost of our IVFS algorithms depends tightly on the sub-sampling rate $\tilde n/n$ and number of random subsets $k$. As one would expect, there exists a trade-off between computational efficiency and distance preserving power. In Figure \ref{fig-compare}, we plot the relative performance (set the value for $k=5000$, $\tilde n=0.5n$ as 1) of different $k$ and $\tilde n$, averaged over all datasets and number of chosen features. In general, the performance of IVFS boosts as $k$ and $\tilde n$ increase because of more accurate estimate of the inclusion values.

Note that in our experiment, when the sample size is relatively large (i.e., greater than 1000), we simply fix $\tilde n=100$, and IVFS still outperforms other methods (on \textit{COIL20, RELATHE}). Thus, to achieve better efficiency, we recommend practitioners to set $k=1000$ as default. For $\tilde n$, if the data size is not very large, we suggest using $\tilde n=0.1n$ as the first try. Otherwise, one may threshold $\tilde n$ at a small number (e.g., 100 in our experiment). This way, the running time, when fixing $k$ and $\tilde d$, becomes approximately constant.

\begin{figure}[t]
	\begin{center}{
		\mbox{\hspace{-0.25in}
			\includegraphics[width=1.8in]{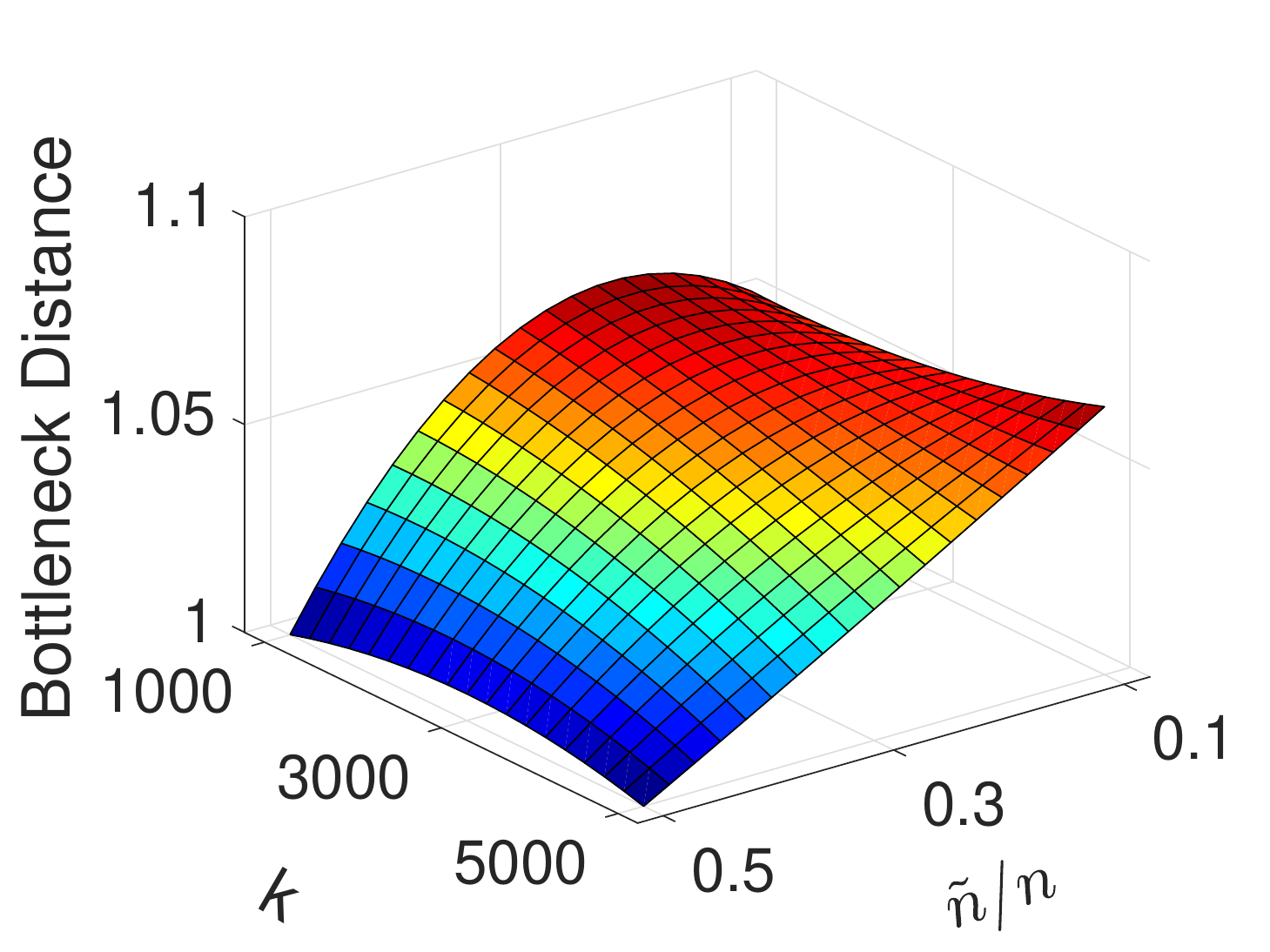} \hspace{-0.1in}
			\includegraphics[width=1.8in]{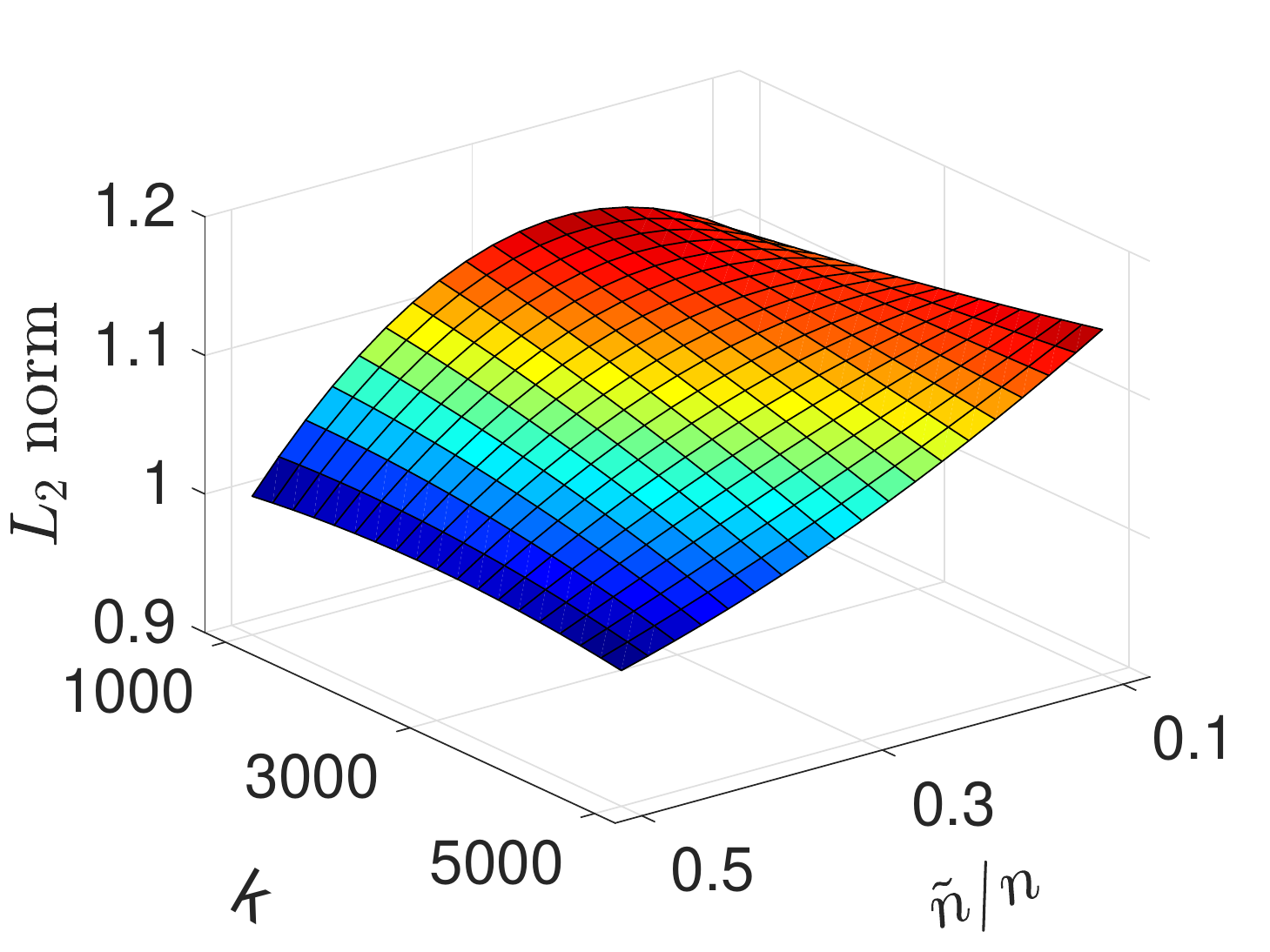}
		}
		}
	\end{center}
	\vspace{-0.1in}
	\caption{Performance comparison across different number of subsets $k$ and sub-sampling ratio $\tilde n/n$.}
	\label{fig-compare}\vspace{-0.1in}
\end{figure}

\subsection{Discussion}

For supervised random forest, it is conventional to set as default $\tilde d=\sqrt d$, which is around $10^2$ when $d$ is around $10^4$. However, in this case we have to use large $k$ to make sure that every feature is evaluated for not too few times. We observe that there is not much gain in performance with small $\tilde d$ and large $k$ combination. Thus, we suggest to use a relatively large $\tilde d$ to speed up the algorithm for higher efficiency.

It is worth mentioning that there are also hybrid strategies for feature selection, where one first determines a pool of features, and then select ultimate feature set from the pool through another round of screening. One thing we observe from the experiments is that IVFS is much more stable and robust than other algorithms, which means that the features selected are mostly ``good'' features that help with reducing the loss. Therefore, IVFS is also  suitable to be applied in the pool selection procedure for such hybrid methods.

\section{Conclusion}

In this paper, we propose IVFS, a unified feature selection scheme based on random subset methods. After we show its connection with existing methods such as random forest and RKNN, we propose IVFS-$l_\infty$ and several variants that can preserve the pairwise distance and topological signatures (persistent diagram) of the original dataset more precisely than the competing similarity preserving algorithms. This would be very helpful for applications requiring high-level distance preservation, e.g. topological data analysis. In the experiments, we evaluate the distance preserving capability of different algorithms through to demonstrate the effectiveness of the proposed IVFS algorithms. We also demonstrate that a sharp sub-sampling rate can be effectively adopted on this problem to speed up the algorithm on large datasets.


\bibliographystyle{aaai20}
\bibliography{sample-bibliography,standard}


\end{document}